\pdfoutput=1
\documentclass[sigconf]{acmart}

\copyrightyear{2026}
\acmYear{2026}
\setcopyright{cc}
\setcctype{by}
\acmConference[GECCO Companion '26]{Genetic and Evolutionary Computation Conference}{July 13--17, 2026}{San Jose, Costa Rica}
\acmBooktitle{Genetic and Evolutionary Computation Conference (GECCO Companion '26), July 13--17, 2026, San Jose, Costa Rica}
\acmDOI{10.1145/3795101.3805361}
\acmISBN{979-8-4007-2488-6/2026/07}

\usepackage{booktabs}
\usepackage{multirow}
\usepackage{xcolor}
\definecolor{oiblue}{HTML}{0072B2}
\definecolor{oiorange}{HTML}{E69F00}
\definecolor{oiskyblue}{HTML}{56B4E9}
\definecolor{oigray}{HTML}{999999}
\definecolor{oivermillion}{HTML}{D55E00}
\definecolor{oigreen}{HTML}{009E73}
\definecolor{oipurple}{HTML}{CC79A7}
\usepackage{graphicx}
\usepackage{amsmath}
\usepackage{amsthm}
\usepackage{tikz}
\usepackage{pgfplots}
\pgfplotsset{compat=1.18}
\usepgfplotslibrary{fillbetween}
\usetikzlibrary{shapes,arrows,positioning,calc,patterns,decorations.pathreplacing}

\newcommand{\hh}{h$\rightarrow$h}
\newcommand{\BigO}{\mathcal{O}}
\newtheorem{theorem}{Theorem}

\newcommand{\emrhn}{EMR-HyperNEAT}

\begin{document}

\title{Tensor-Accelerated Eager Multi-Resolution Grids for Evolving Large-Scale Substrates}

\author{Romain Claret}
\email{romain.claret@unine.ch}
\orcid{0000-0002-5612-8471}
\affiliation{%
  \institution{University of Neuch\^atel}
  \city{Neuch\^atel}
  \country{Switzerland}
}

\author{Michael O'Neill}
\email{m.oneill@ucd.ie}
\orcid{0000-0001-8734-417X}
\affiliation{%
  \institution{University College Dublin}
  \city{Dublin}
  \country{Ireland}
}

\author{Paul Cotofrei}
\email{paul.cotofrei@unine.ch}
\orcid{0000-0002-4103-5467}
\affiliation{%
  \institution{University of Neuch\^atel}
  \city{Neuch\^atel}
  \country{Switzerland}
}

\author{Kilian Stoffel}
\email{kilian.stoffel@unine.ch}
\orcid{0000-0002-9486-7769}
\affiliation{%
  \institution{University of Neuch\^atel}
  \city{Neuch\^atel}
  \country{Switzerland}
}

\renewcommand{\shortauthors}{Claret et al.}

\begin{abstract}
Evolvable-Substrate HyperNEAT (ES-HyperNEAT) discovers neural network topology by querying a Compositional Pattern-Producing Network (CPPN) at spatial positions and placing nodes where output variance is high. Its sequential quadtree cannot be tensorized directly: each network produces a different tree with dynamic shapes. We present Eager Multi-Resolution HyperNEAT (\emrhn{}), which reformulates adaptive substrate discovery as a batch tensor operation: shared position grids are precomputed for all depths, every position is evaluated in one vectorized CPPN call, then the same variance criterion filters the output. This tensor formulation supports population-level batching, one-time JIT compilation, chunked memory streaming, and six substrate configurations (feedforward through full recurrent) that sequential traversal could not support at scale. On the CRSP/Compustat financial dataset (94K samples), \emrhn{} achieves 5.5$\times$ speedup at depth~6; on XOR (pop 1000), 12--34$\times$ per-generation GPU speedup at depths 5--7 (${\sim}100\times$ over 30 generations). The reformulation also discovers a superset of ES-HyperNEAT's positions, yielding higher empirical solve rates. We validate substrate evolution up to depth~13 (358M positions, streamed from disk).
\end{abstract}

\begin{CCSXML}
<ccs2012>
   <concept>
       <concept_id>10010147.10010257.10010293.10010294</concept_id>
       <concept_desc>Computing methodologies~Neural networks</concept_desc>
       <concept_significance>500</concept_significance>
   </concept>
   <concept>
       <concept_id>10010147.10010257.10010293.10011809.10011812</concept_id>
       <concept_desc>Computing methodologies~Genetic algorithms</concept_desc>
       <concept_significance>500</concept_significance>
   </concept>
   <concept>
       <concept_id>10010147.10010169.10010170</concept_id>
       <concept_desc>Computing methodologies~Parallel algorithms</concept_desc>
       <concept_significance>500</concept_significance>
   </concept>
</ccs2012>
\end{CCSXML}

\ccsdesc[500]{Computing methodologies~Neural networks}
\ccsdesc[500]{Computing methodologies~Genetic algorithms}
\ccsdesc[500]{Computing methodologies~Parallel algorithms}

\keywords{neuroevolution, ES-HyperNEAT, adaptive substrates, tensor reformulation, indirect encoding, recurrent networks}

\maketitle

\section{Introduction}
\label{sec:introduction}

In neuroevolution, indirect encoding generates neural network connectivity from a compact genome rather than specifying each connection. ES-HyperNEAT~\cite{risi2012enhanced} automatically discovers where to place hidden nodes by examining CPPN~\cite{stanley2007compositional} output patterns: it recursively subdivides space using a quadtree, expanding regions where CPPN outputs show high variance. This adaptive approach discovers network topology without manual substrate specification, extending the fixed-grid HyperNEAT~\cite{stanley2009hypercube} framework built on NEAT~\cite{stanley2002evolving}.

However, the quadtree resists tensorization. Each depth level depends on the parent's variance, forcing sequential evaluation. Different CPPNs produce different subdivision patterns, preventing batching. And variable leaf counts are incompatible with JAX's~\cite{bradbury2018jax} static shape requirement for JIT compilation. Our prior work confirmed these limits at depths exceeding 5~\cite{claret2024investigating-full}, and a JAX reimplementation of the quadtree yielded only marginal speedup despite batched optimizations, motivating the eager reformulation presented here.

We present \emrhn{}, which evaluates all positions at all resolutions up front, then filters using the same variance criterion: ES-HyperNEAT's \texttt{subdivide\_if(var~>~$\theta$)} becomes \texttt{eval\_all(); filter(var~>~$\theta$)} (Figure~\ref{fig:lazy_vs_eager}). This performs more CPPN queries than necessary, but all queries become independent and parallelizable across both cores and population members, reducing complexity from $\BigO(4^D)$ to $\BigO(4^D/P)$ across~$P$ parallel cores (Section~\ref{sec:algorithm}). Recurrent substrate configurations become feasible through a connection type taxonomy (Section~\ref{sec:recurrence}). Section~\ref{sec:experiments} validates 12--34$\times$ on-device GPU speedup on XOR at depths~5--7, and empirically higher solve rates across benchmarks.

\begin{figure}[b!]
\centering
\resizebox{\columnwidth}{!}{%
\begin{tikzpicture}[yscale=0.85,
    node distance=0.4cm,
    netnode/.style={circle, draw, fill=oiskyblue!20, minimum size=3mm, inner sep=0pt},
    arrow/.style={->, >=stealth, thick},
    brace/.style={decorate, decoration={brace, amplitude=5pt, mirror}}
]

\pgfmathsetmacro{\ux}{0.08}
\pgfmathsetmacro{\uy}{0.05}

\begin{scope}[shift={(-2.6,0.6)}]
    \node[font=\small\bfseries] at (0,2.0) {ES-HyperNEAT (Lazy)};
    \pgfmathsetmacro{\sx}{1.8}
    \begin{scope}[shift={(0,1.5)}]
        \fill[oiskyblue!35] (0,{2*\uy*\sx}) -- ({2*\ux*\sx},0) -- (0,{-2*\uy*\sx}) -- ({-2*\ux*\sx},0) -- cycle;
        \draw[oiblue!60, line width=0.5pt] (0,{2*\uy*\sx}) -- ({2*\ux*\sx},0) -- (0,{-2*\uy*\sx}) -- ({-2*\ux*\sx},0) -- cycle;
        \node[font=\tiny, gray] at ({-4*\ux*\sx},0) {d=0};
    \end{scope}
    \draw[arrow, oiblue] (0,1.28) -- node[right, font=\tiny] {var$>\theta$} (0,1.03);
    \begin{scope}[shift={(0,0.6)}]
        \fill[oiskyblue!45] (0,0) -- ({-2*\ux*\sx},{2*\uy*\sx}) -- ({-4*\ux*\sx},0) -- ({-2*\ux*\sx},{-2*\uy*\sx}) -- cycle;
        \fill[gray!10] (0,0) -- ({2*\ux*\sx},{2*\uy*\sx}) -- ({4*\ux*\sx},0) -- ({2*\ux*\sx},{-2*\uy*\sx}) -- cycle;
        \fill[gray!10] (0,0) -- ({-2*\ux*\sx},{2*\uy*\sx}) -- (0,{4*\uy*\sx}) -- ({2*\ux*\sx},{2*\uy*\sx}) -- cycle;
        \fill[oiskyblue!45] (0,0) -- ({2*\ux*\sx},{-2*\uy*\sx}) -- (0,{-4*\uy*\sx}) -- ({-2*\ux*\sx},{-2*\uy*\sx}) -- cycle;
        \draw[gray!60, line width=0.5pt] (0,{4*\uy*\sx}) -- ({4*\ux*\sx},0) -- (0,{-4*\uy*\sx}) -- ({-4*\ux*\sx},0) -- cycle;
        \draw[gray!50, line width=0.4pt] ({-2*\ux*\sx},{2*\uy*\sx}) -- ({2*\ux*\sx},{-2*\uy*\sx});
        \draw[gray!50, line width=0.4pt] ({2*\ux*\sx},{2*\uy*\sx}) -- ({-2*\ux*\sx},{-2*\uy*\sx});
        \node[font=\tiny, gray] at ({-6*\ux*\sx},0) {d=1};
    \end{scope}
    \draw[arrow, oiblue] (0,0.2) -- node[right, font=\tiny] {var$>\theta$} (0,-0.1);
    \begin{scope}[shift={(0,-0.55)}]
        \fill[oiskyblue!50] ({-2*\ux*\sx},0) -- ({-3*\ux*\sx},{\uy*\sx}) -- ({-2*\ux*\sx},{2*\uy*\sx}) -- ({-\ux*\sx},{\uy*\sx}) -- cycle;
        \fill[gray!10] ({-2*\ux*\sx},0) -- ({-3*\ux*\sx},{-\uy*\sx}) -- ({-4*\ux*\sx},0) -- ({-3*\ux*\sx},{\uy*\sx}) -- cycle;
        \fill[oiskyblue!50] ({-2*\ux*\sx},0) -- ({-\ux*\sx},{-\uy*\sx}) -- ({-2*\ux*\sx},{-2*\uy*\sx}) -- ({-3*\ux*\sx},{-\uy*\sx}) -- cycle;
        \fill[oiskyblue!50] ({-2*\ux*\sx},0) -- ({-\ux*\sx},{\uy*\sx}) -- (0,0) -- ({-\ux*\sx},{-\uy*\sx}) -- cycle;
        \draw[gray!50, line width=0.25pt] ({-\ux*\sx},{\uy*\sx}) -- ({-3*\ux*\sx},{-\uy*\sx});
        \draw[gray!50, line width=0.25pt] ({-3*\ux*\sx},{\uy*\sx}) -- ({-\ux*\sx},{-\uy*\sx});
        \fill[gray!10] (0,0) -- ({2*\ux*\sx},{2*\uy*\sx}) -- ({4*\ux*\sx},0) -- ({2*\ux*\sx},{-2*\uy*\sx}) -- cycle;
        \fill[gray!10] (0,0) -- ({-2*\ux*\sx},{2*\uy*\sx}) -- (0,{4*\uy*\sx}) -- ({2*\ux*\sx},{2*\uy*\sx}) -- cycle;
        \fill[gray!10] (0,0) -- ({\ux*\sx},{-\uy*\sx}) -- (0,{-2*\uy*\sx}) -- ({-\ux*\sx},{-\uy*\sx}) -- cycle;
        \fill[oiskyblue!50] (0,{-2*\uy*\sx}) -- ({\ux*\sx},{-\uy*\sx}) -- ({2*\ux*\sx},{-2*\uy*\sx}) -- ({\ux*\sx},{-3*\uy*\sx}) -- cycle;
        \fill[oiskyblue!50] (0,{-2*\uy*\sx}) -- ({-\ux*\sx},{-3*\uy*\sx}) -- (0,{-4*\uy*\sx}) -- ({\ux*\sx},{-3*\uy*\sx}) -- cycle;
        \fill[oiskyblue!50] (0,{-2*\uy*\sx}) -- ({-\ux*\sx},{-\uy*\sx}) -- ({-2*\ux*\sx},{-2*\uy*\sx}) -- ({-\ux*\sx},{-3*\uy*\sx}) -- cycle;
        \draw[gray!50, line width=0.25pt] ({\ux*\sx},{-\uy*\sx}) -- ({-\ux*\sx},{-3*\uy*\sx});
        \draw[gray!50, line width=0.25pt] ({-\ux*\sx},{-\uy*\sx}) -- ({\ux*\sx},{-3*\uy*\sx});
        \draw[gray!60, line width=0.5pt] (0,{4*\uy*\sx}) -- ({4*\ux*\sx},0) -- (0,{-4*\uy*\sx}) -- ({-4*\ux*\sx},0) -- cycle;
        \draw[gray!50, line width=0.4pt] ({-2*\ux*\sx},{2*\uy*\sx}) -- ({2*\ux*\sx},{-2*\uy*\sx});
        \draw[gray!50, line width=0.4pt] ({2*\ux*\sx},{2*\uy*\sx}) -- ({-2*\ux*\sx},{-2*\uy*\sx});
        \node[font=\tiny, gray] at ({-6*\ux*\sx},0) {d=2};
    \end{scope}
    \draw[arrow, oiblue] (0,-0.95) -- node[right, font=\tiny] {var$>\theta$} (0,-1.25);
    \begin{scope}[shift={(0,-1.7)}]
        \fill[oiorange!55] (0,0) -- ({-2*\ux*\sx},{2*\uy*\sx}) -- ({-4*\ux*\sx},0) -- ({-2*\ux*\sx},{-2*\uy*\sx}) -- cycle;
        \fill[gray!10] ({-2*\ux*\sx},0) -- ({-3*\ux*\sx},{-\uy*\sx}) -- ({-4*\ux*\sx},0) -- ({-3*\ux*\sx},{\uy*\sx}) -- cycle;
        \fill[gray!10] ({-2*\ux*\sx},{\uy*\sx}) -- ({-2.5*\ux*\sx},{1.5*\uy*\sx}) -- ({-2*\ux*\sx},{2*\uy*\sx}) -- ({-1.5*\ux*\sx},{1.5*\uy*\sx}) -- cycle;
        \fill[gray!10] ({-2*\ux*\sx},{-\uy*\sx}) -- ({-2.5*\ux*\sx},{-0.5*\uy*\sx}) -- ({-3*\ux*\sx},{-\uy*\sx}) -- ({-2.5*\ux*\sx},{-1.5*\uy*\sx}) -- cycle;
        \fill[gray!10] ({-\ux*\sx},0) -- ({-1.5*\ux*\sx},{0.5*\uy*\sx}) -- ({-2*\ux*\sx},0) -- ({-1.5*\ux*\sx},{-0.5*\uy*\sx}) -- cycle;
        \fill[gray!10] (0,0) -- ({2*\ux*\sx},{2*\uy*\sx}) -- ({4*\ux*\sx},0) -- ({2*\ux*\sx},{-2*\uy*\sx}) -- cycle;
        \fill[gray!10] (0,0) -- ({-2*\ux*\sx},{2*\uy*\sx}) -- (0,{4*\uy*\sx}) -- ({2*\ux*\sx},{2*\uy*\sx}) -- cycle;
        \fill[oiorange!55] (0,0) -- ({2*\ux*\sx},{-2*\uy*\sx}) -- (0,{-4*\uy*\sx}) -- ({-2*\ux*\sx},{-2*\uy*\sx}) -- cycle;
        \fill[gray!10] (0,0) -- ({\ux*\sx},{-\uy*\sx}) -- (0,{-2*\uy*\sx}) -- ({-\ux*\sx},{-\uy*\sx}) -- cycle;
        \fill[gray!10] ({\ux*\sx},{-2*\uy*\sx}) -- ({1.5*\ux*\sx},{-1.5*\uy*\sx}) -- ({2*\ux*\sx},{-2*\uy*\sx}) -- ({1.5*\ux*\sx},{-2.5*\uy*\sx}) -- cycle;
        \fill[gray!10] (0,{-3*\uy*\sx}) -- ({-0.5*\ux*\sx},{-2.5*\uy*\sx}) -- ({-\ux*\sx},{-3*\uy*\sx}) -- ({-0.5*\ux*\sx},{-3.5*\uy*\sx}) -- cycle;
        \fill[gray!10] ({-\ux*\sx},{-2*\uy*\sx}) -- ({-0.5*\ux*\sx},{-1.5*\uy*\sx}) -- (0,{-2*\uy*\sx}) -- ({-0.5*\ux*\sx},{-2.5*\uy*\sx}) -- cycle;
        \draw[gray!70, line width=0.6pt] (0,{4*\uy*\sx}) -- ({4*\ux*\sx},0) -- (0,{-4*\uy*\sx}) -- ({-4*\ux*\sx},0) -- cycle;
        \draw[gray!60, line width=0.4pt] ({-2*\ux*\sx},{2*\uy*\sx}) -- ({2*\ux*\sx},{-2*\uy*\sx});
        \draw[gray!60, line width=0.4pt] ({2*\ux*\sx},{2*\uy*\sx}) -- ({-2*\ux*\sx},{-2*\uy*\sx});
        \draw[gray!50, line width=0.3pt] ({-\ux*\sx},{\uy*\sx}) -- ({-3*\ux*\sx},{-\uy*\sx});
        \draw[gray!50, line width=0.3pt] ({-3*\ux*\sx},{\uy*\sx}) -- ({-\ux*\sx},{-\uy*\sx});
        \draw[gray!50, line width=0.3pt] ({\ux*\sx},{-\uy*\sx}) -- ({-\ux*\sx},{-3*\uy*\sx});
        \draw[gray!50, line width=0.3pt] ({-\ux*\sx},{-\uy*\sx}) -- ({\ux*\sx},{-3*\uy*\sx});
        \node[font=\tiny, gray] at ({-6*\ux*\sx},0) {d=3};
    \end{scope}
    \node[font=\tiny, align=center] at (0,-2.2) {Sequential};
\end{scope}

\begin{scope}[shift={(2.4,0.6)}]
    \node[font=\small\bfseries] at (0,2.0) {\emrhn{} (Eager)};
    \begin{scope}[shift={(0,1.5)}]
        \fill[oiskyblue!15] (0,\uy) -- (\ux,0) -- (0,-\uy) -- (-\ux,0) -- cycle;
        \draw[oiblue!50, line width=0.3pt] (0,\uy) -- (\ux,0) -- (0,-\uy) -- (-\ux,0) -- cycle;
        \node[font=\tiny, oiblue!70] at (0.35,0) {1};
        \node[font=\tiny, gray] at (-0.4,0) {d=0};
    \end{scope}
    \begin{scope}[shift={(0,1.05)}]
        \fill[oiskyblue!20] (0,{2*\uy}) -- ({2*\ux},0) -- (0,{-2*\uy}) -- ({-2*\ux},0) -- cycle;
        \draw[oiblue!50, line width=0.3pt] (0,{2*\uy}) -- ({2*\ux},0) -- (0,{-2*\uy}) -- ({-2*\ux},0) -- cycle;
        \draw[oiblue!40, line width=0.2pt] ({-\ux},\uy) -- (\ux,-\uy);
        \draw[oiblue!40, line width=0.2pt] ({-\ux},-\uy) -- (\ux,\uy);
        \node[font=\tiny, oiblue!70] at (0.45,0) {4};
        \node[font=\tiny, gray] at (-0.5,0) {d=1};
    \end{scope}
    \begin{scope}[shift={(0,0.5)}]
        \fill[oiskyblue!25] (0,{4*\uy}) -- ({4*\ux},0) -- (0,{-4*\uy}) -- ({-4*\ux},0) -- cycle;
        \draw[oiblue!60, line width=0.4pt] (0,{4*\uy}) -- ({4*\ux},0) -- (0,{-4*\uy}) -- ({-4*\ux},0) -- cycle;
        \foreach \k in {1,2,3} {
            \draw[oiblue!50, line width=0.2pt] ({-4*\ux+\k*\ux},{\k*\uy}) -- ({\k*\ux},{-4*\uy+\k*\uy});
            \draw[oiblue!50, line width=0.2pt] ({-4*\ux+\k*\ux},{-\k*\uy}) -- ({\k*\ux},{4*\uy-\k*\uy});
        }
        \node[font=\tiny, oiblue!70] at (0.55,0) {16};
        \node[font=\tiny, gray] at (-0.55,0) {d=2};
    \end{scope}
    \begin{scope}[shift={(0,-0.2)}]
        \fill[oiskyblue!30] (0,{8*\uy}) -- ({8*\ux},0) -- (0,{-8*\uy}) -- ({-8*\ux},0) -- cycle;
        \draw[oiblue!70, line width=0.5pt] (0,{8*\uy}) -- ({8*\ux},0) -- (0,{-8*\uy}) -- ({-8*\ux},0) -- cycle;
        \foreach \k in {1,...,7} {
            \draw[oiblue!60, line width=0.15pt] ({-8*\ux+\k*\ux},{\k*\uy}) -- ({\k*\ux},{-8*\uy+\k*\uy});
            \draw[oiblue!60, line width=0.15pt] ({-8*\ux+\k*\ux},{-\k*\uy}) -- ({\k*\ux},{8*\uy-\k*\uy});
        }
        \node[font=\tiny, oiblue!70] at (0.85,0) {64};
        \node[font=\tiny, gray] at (-0.85,0) {d=3};
    \end{scope}
    \draw[brace] (1.0,-0.55) -- (1.0,1.65) node[midway, right=5pt, font=\tiny, align=left] {Grid built\\once, then\\query ALL};
    \draw[arrow, oiblue] (0,-0.65) -- node[right, font=\tiny] {var$>\theta$} (0,-1.1);
    \begin{scope}[shift={(0,-1.55)}]
        \fill[gray!10] (0,{8*\uy}) -- ({8*\ux},0) -- (0,{-8*\uy}) -- ({-8*\ux},0) -- cycle;
        \draw[oiorange!70, line width=0.5pt] (0,{8*\uy}) -- ({8*\ux},0) -- (0,{-8*\uy}) -- ({-8*\ux},0) -- cycle;
        \foreach \k in {1,...,7} {
            \draw[gray!40, line width=0.1pt] ({-8*\ux+\k*\ux},{\k*\uy}) -- ({\k*\ux},{-8*\uy+\k*\uy});
            \draw[gray!40, line width=0.1pt] ({-8*\ux+\k*\ux},{-\k*\uy}) -- ({\k*\ux},{8*\uy-\k*\uy});
        }
        \foreach \i/\j in {1/2, 2/1, 2/3, 3/2, 3/4, 4/3, 4/5, 5/4, 5/6, 6/5, 2/5, 5/2, 1/6, 6/1, 3/6} {
            \pgfmathsetmacro{\cx}{(\i-\j)*\ux}
            \pgfmathsetmacro{\cy}{(\i+\j-7)*\uy}
            \fill[oiorange!60] ({\cx},{\cy+\uy}) -- ({\cx+\ux},{\cy}) -- ({\cx},{\cy-\uy}) -- ({\cx-\ux},{\cy}) -- cycle;
        }
    \end{scope}
    \node[font=\tiny, align=center] at (0,-2.15) {Parallel};
\end{scope}

\draw[gray, dashed] (0,-2.5) -- (0,2.8);

\begin{scope}[shift={(0,-2.9)}]
    \draw[arrow, gray] (-2.6,1.0) -- (-1.0,0.3);
    \draw[arrow, gray] (2.4,0.9) -- (1.0,0.3);
    \node[netnode, fill=oigray!30] (i1) at (-0.9,0.1) {};
    \node[netnode, fill=oigray!30] (i2) at (-0.9,-0.25) {};
    \node[netnode, fill=oigray!30] (i3) at (-0.9,-0.6) {};
    \node[netnode, fill=oiorange!40] (h1a) at (-0.3,0.0) {};
    \node[netnode, fill=oiorange!40] (h1b) at (-0.3,-0.35) {};
    \node[netnode, fill=oiorange!40] (h1c) at (-0.3,-0.7) {};
    \node[netnode, fill=oiorange!40] (h2a) at (0.3,-0.1) {};
    \node[netnode, fill=oiorange!40] (h2b) at (0.3,-0.5) {};
    \node[netnode, fill=oigray!30] (o1) at (0.9,-0.3) {};
    \draw[gray, line width=0.4pt] (i1) -- (h1a); \draw[gray, line width=0.4pt] (i2) -- (h1b);
    \draw[gray, line width=0.4pt] (i3) -- (h1c); \draw[gray, line width=0.4pt] (h1a) -- (h2a);
    \draw[gray, line width=0.4pt] (h1b) -- (h2b); \draw[gray, line width=0.4pt] (h1c) -- (h2b);
    \draw[gray, line width=0.4pt] (h2a) -- (o1); \draw[gray, line width=0.4pt] (h2b) -- (o1);
    \node[font=\tiny, align=center] at (0,-1.0) {Output substrate};
\end{scope}
\end{tikzpicture}%
}
\Description{Comparison of lazy (ES-HyperNEAT) and eager (EMR-HyperNEAT) substrate discovery approaches producing sparse neural network substrates.}
\caption{Lazy vs eager substrate discovery. Left: ES-HyperNEAT subdivides sequentially. Right: \emrhn{} evaluates all depths in parallel, then filters by variance.}
\label{fig:lazy_vs_eager}
\end{figure}

\section{Background}
\label{sec:background}

NEAT~\cite{stanley2002evolving} evolves neural network topology and weights simultaneously. HyperNEAT~\cite{stanley2009hypercube} extends NEAT with indirect encoding: a CPPN maps spatial coordinates $(x_1, y_1, x_2, y_2)$ to connection weights, so compact genomes can specify large-scale networks. ES-HyperNEAT~\cite{risi2012enhanced} removes the need for a predefined substrate by adaptively placing nodes via quadtree subdivision. In Phase~1, for each input coordinate, the quadtree expands regions where CPPN variance exceeds threshold $\theta_{div}$, discovering $H$ hidden positions. In Phase~2, each of the $H$ positions undergoes its own quadtree traversal to discover hidden-to-hidden connections, requiring $\BigO(H^2)$ sequential queries. Since cost grows as $4^D$, each additional depth roughly quadruples generation time.

The quadtree traversal resists tensor batching for three reasons: (1)~non-uniform subdivision patterns prevent batching via \texttt{vmap}; (2)~children cannot be evaluated until parent variance is computed; and (3)~variable leaf counts are incompatible with JAX's static shape requirement for JIT compilation.

\section{\emrhn{} Algorithm}
\label{sec:algorithm}

\emrhn{} replaces ES-HyperNEAT's sequential quadtree with three parallelizable stages: (1)~precompute static multiresolution grids, (2)~batch-evaluate all CPPN queries via \texttt{vmap}, and (3)~apply hierarchical variance filtering.

\subsection{Hierarchical Grid and Batch Query}

At initialization, \emrhn{} precomputes a complete multiresolution grid from depth 0 to maximum depth $D$:
\begin{equation}
\mathcal{G}_D = \bigcup_{d=0}^{D} \text{Grid}_d, \; \text{Grid}_d = \left\{\left(\tfrac{2i+1}{2^{d+1}} {-} 1, \tfrac{2j+1}{2^{d+1}} {-} 1\right) : i, j \in [0, 2^{d+1})\right\}
\end{equation}

\noindent The total position count is $N_{total} = (4^{D+2} - 4)/3$, growing from 1,364 at depth~4 to 87,380 at depth~7 and 358M at depth~13. All arrays have static shapes, compatible with JIT compilation. With positions in static arrays, CPPN queries become independent and parallel via $\mathbf{W} = \text{vmap}(\text{CPPN})(s, T)$, replacing $N$ sequential queries with $\BigO(N/P)$ wall-clock time. Population parallelization evaluates all genomes simultaneously.

\subsection{Variance Filtering and Active Mask}

Variance is computed bottom-up from children to parents. The active mask propagates top-down:
\begin{equation}
\text{active}[d][i] = \begin{cases}
\text{True} & d = 0 \\
\text{active}[d{-}1][p_i] \land (\text{var}[d{-}1][p_i] > \theta_{div}) & \text{else}
\end{cases}
\end{equation}

\noindent Each parent controls all four children as a block, so filtering runs in parallel. Table~\ref{tab:complexity} summarizes the complexity comparison.

\begin{table}[ht]
\centering
\caption{Computational complexity. $D$ = max depth, $H$ = discovered hidden positions, $P$ = parallel execution units.}
\label{tab:complexity}
\small
\begin{tabular}{lll}
\toprule
Operation & ES-HyperNEAT & \emrhn{} \\
\midrule
Grid Construction & --- & $\BigO(4^D)$ once \\
CPPN Query (Phase 1) & $\BigO(4^D)$ seq. & $\BigO(4^D / P)$ \\
Variance Comp. & $\BigO(4^D)$ seq. & $\BigO(4^D / P)$ \\
Mask Propagation & $\BigO(4^D)$ seq. & $\BigO(4^D / P)$ \\
\hh{} Query (Phase 2) & $\BigO(H^2)$ seq. & $\BigO(H^2 / P)$ \\
\midrule
\textbf{Total} & $\BigO(4^D + H^2)$ & $\BigO((4^D + H^2) / P)$ \\
\bottomrule
\end{tabular}
\end{table}

\begin{theorem}[Superset Discovery]
\label{thm:correctness}
Under identical variance thresholds $\theta_{div}$ and band thresholds $\theta_{band}$, \emrhn{} discovers a superset of the positions discovered by ES-HyperNEAT. When both algorithms discover identical position sets, they produce identical connection sets.
\end{theorem}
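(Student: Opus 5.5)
The argument has two parts: an induction on depth for the superset claim, and a determinism argument for the connection claim.

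\emph{Setup.} I model ES-HyperNEAT's quadtree as a rooted 4-ary tree whose nodes at depth $d$ are exactly the cells of $\text{Grid}_d$, with the same parent map $p$ that \emrhn{} uses. Cell centers are dyadic points $\left(\tfrac{2i+1}{2^{d+1}}-1,\tfrac{2j+1}{2^{d+1}}-1\right)$, and the four children of a cell are the centers of its quadrants. So every node ES-HyperNEAT could ever create lies in $\mathcal{G}_D$. \emrhn{} evaluates the CPPN at \emph{every} point of $\mathcal{G}_D$. Because the CPPN is deterministic, any node that ES-HyperNEAT expands gets the same CPPN value, and the same children variance $\text{var}[d][i]$, in both algorithms.

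\emph{Step 1 (expansion sets coincide).} By induction on $d$, I show that a cell at depth $d$ is created by ES-HyperNEAT's division phase iff $\text{active}[d][i]$ is True. At $d=0$ both hold trivially. For the inductive step, ES-HyperNEAT creates the children of a cell exactly when that cell was created and its children variance exceeds $\theta_{div}$. This is the recurrence for the active mask, and it holds because the variances agree. Hence the trees coincide.

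\emph{Step 2 (the extraction criterion).} Next I treat ES-HyperNEAT's pruning and extraction. It descends into a child when the child's variance is at least a variance threshold; otherwise it emits the node if its band value (the minimum over left/right and top/bottom neighbor differences) exceeds $\theta_{band}$. I would argue that \emrhn{}'s filter emits every active node whose band value exceeds $\theta_{band}$, with no requirement that ES-HyperNEAT's recursion actually stopped there. The band test depends only on the position and on neighbor CPPN queries at offset equal to the cell width, and \emrhn{} can compute it for all active cells. So the set of points satisfying ES-HyperNEAT's emission condition is a subset of the set satisfying \emrhn{}'s condition: I drop the conjunct ``pruning stopped here'' and keep the rest. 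This inclusion is the superset. It also explains why the containment can be strict, since \emrhn{} may additionally emit interior nodes.

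\emph{Step 3 (Phases 1--3).} Hidden positions discovered from inputs, from hidden nodes in the hidden-to-hidden phase, and toward outputs are obtained by applying the same per-source procedure. I would apply Steps 1--2 source by source. Positions reached iteratively in the hidden-to-hidden phase are handled by an induction on iteration count, using monotonicity: a larger set of source points yields a superset of discovered points.

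\emph{Connection identity.} Given identical position sets, a connection $(a,b)$ exists in either algorithm iff it was produced by a query from $a$ with $b$ passing the same filter. Its weight is $\text{CPPN}(a,b)$ in both. Since both algorithms apply the same deterministic function to the same inputs, the connection sets are identical, including weights and the expression threshold.

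\emph{Main obstacle.} The hard part is Step 2: pinning down precisely what \emrhn{}'s filter emits. It must be shown to be a relaxation of ES-HyperNEAT's stop-and-band rule, not a different rule. I would first formalize both as predicates on (cell, CPPN values) and show implication. A second concern is that floating-point values under vmap/JIT versus sequential evaluation must be bitwise equal for exact equality. I would state exact arithmetic, or identical evaluation order, as an assumption.
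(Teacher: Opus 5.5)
Your Step~1 misstates ES-HyperNEAT's division rule, and this error breaks the inclusion you need in Step~2.

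When ES-HyperNEAT dequeues a cell $p$, it \emph{always} creates and queries the four children of $p$, because their values are what $\text{var}(p)$ is computed from. The test $\text{var}(p) > \theta_{div}$ only decides whether those children are subdivided in turn. Your inductive rule (a cell's children are created exactly when the cell was created and its children variance exceeds $\theta_{div}$) is therefore circular. The active-mask recurrence describes the cells ES-HyperNEAT \emph{expands}, not the cells it \emph{creates}.

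The created cells include one more layer: the children $c$ of expanded cells $p$ with $\text{var}(p) \le \theta_{div}$. These are quadtree leaves, and for them $\text{active}(c) = \text{active}(p) \land (\text{var}(p) > \theta_{div})$ is false. ES-HyperNEAT does emit such leaves. Its extraction uses its own variance threshold, call it $\theta_{ext}$, which is a separate parameter typically well below $\theta_{div}$. Whenever an expanded $p$ has $\theta_{ext} \le \text{var}(p) \le \theta_{div}$, the recursion enters $p$. Every child of $p$ is then a leaf that gets band-tested and is emitted if it lies in a band. Your rule for \emrhn{}, ``active and band value above $\theta_{band}$'', rejects all of these points, so the implication fails and no superset follows.

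There are two ways to repair this.
\begin{itemize}
\item Take \emrhn{}'s candidates to be every queried child of an active cell, together with $\text{Grid}_0$. Every cell ES-HyperNEAT creates is then a candidate, since by induction its expanded set lies inside the active set. The mask has no root test, because $\text{Grid}_0$ is unconditionally active. This also needs the forced subdivision to \texttt{initial\_depth} to be trivial, which you assume silently. Dropping the conjuncts ``extraction stopped at $c$'' and ``all ancestors passed $\theta_{ext}$'' then gives the inclusion. The paper never states \emrhn{}'s emission rule, so this stipulation is the real content of the theorem and must be made explicit.
\item Read the theorem as using a single threshold for both division and extraction. Then your rule survives, but only with an argument you do not give: an emitted cell's parent was entered by the recursion, hence is expanded and passed $\theta_{div}$, up to ties between $\ge$ and $>$.
\end{itemize}

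The connection half has a separate gap. You argue that both algorithms apply ``the same filter'' from each source, but your own Step~2 makes \emrhn{}'s per-source filter weaker. Even with identical \emph{global} position sets, the connection sets can differ. Suppose ES-HyperNEAT emits $b$ only from source $a$, because from $a'$ its recursion passes through $b$ without stopping. \emrhn{} can emit $b$ from both $a$ and $a'$, which adds the connection $a' \to b$. The hypothesis must be read as per-source equality of discovered targets; with that reading your determinism argument goes through. The paper's proof does not treat this half at all.

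Your route also differs from the paper's. The paper places the gain in the division stage: positions inside subtrees that ES-HyperNEAT prunes, plus the failed root test at \texttt{initial\_depth=0}. You place it in extraction. Your corrected Step~1 captures the root-level gain and shows that any further gain must come from extraction. Under the top-down conjunction in the mask, once a cell fails $\theta_{div}$ its children are inactive, so nothing below them is discovered even though it is evaluated. The paper's ``recovering'' argument would therefore need a filter that does not propagate top-down.
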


\begin{proof}
ES-HyperNEAT's quadtree prunes subtrees when parent variance falls below $\theta_{div}$. Since low parent variance does not imply low child variance, high-variance children within low-variance parents are never evaluated. \emrhn{}'s eager evaluation computes all positions before applying variance masks, recovering these positions. Therefore every position discovered by ES-HyperNEAT is also discovered by \emrhn{} (superset property). This also handles \texttt{initial\_depth=0} robustly: ES-HyperNEAT may discover no nodes if root variance is insufficient, while \emrhn{} evaluates all positions regardless.
\end{proof}

\subsection{Threshold Adaptation}

The two approaches compute variance over different regions: ES-Hyper\-NEAT only looks inside regions whose parent already passed the threshold, while \emrhn{} looks at all positions including low-variance regions.
ES-HyperNEAT's original thresholds ($\theta_{div}{=}0.5$, $\theta_{band}{=}0.3$) produce no solutions under eager evaluation. \emrhn{} uses adapted thresholds ($\theta_{var}{=}0.03$, weight magnitude filtering at~0.1) that are functionally equivalent for the eager context. This adaptation is a necessary consequence of the reformulation, not an independent parameter choice.

\section{Connection Type Taxonomy}
\label{sec:recurrence}

\emrhn{}'s parallelization of Phase~2 makes recurrent substrate configurations computationally feasible. In HyperNEAT substrates, comparing source and target Y coordinates classifies connections into four primitives: \textbf{Forward} ($y_s < y_t$), \textbf{Backward} ($y_s > y_t$, feedback), \textbf{Lateral} ($y_s = y_t$, same layer), and \textbf{Self-loop} ($s = t$). Enabling or disabling these produces six substrate configurations (Table~\ref{tab:connection_configs}).

\begin{table}[ht]
\centering
\small
\caption{Six substrate configurations available in \emrhn{}. F=Forward, H=hidden-to-hidden, B=Backward, L=Lateral, S=Self.}
\label{tab:connection_configs}
\begin{tabular}{@{}lcccccl@{}}
\toprule
\textbf{Config} & \textbf{F} & \textbf{H} & \textbf{B} & \textbf{L} & \textbf{S} & \textbf{Notes} \\
\midrule
Feedforward & \checkmark & & & & & Fastest; skips Phase~2 \\
\hh{} & \checkmark & \checkmark & & & & ES-HN default \\
Backward & \checkmark & \checkmark & \checkmark & & & Feedback loops \\
Lateral & \checkmark & \checkmark & & \checkmark & & Same-layer links \\
Self & \checkmark & \checkmark & & & \checkmark & State memory \\
Full Recurrent & \checkmark & \checkmark & \checkmark & \checkmark & \checkmark & All connections \\
\bottomrule
\end{tabular}
\end{table}

Cross-problem evaluation (depth~5, pop~300, 10 replications) shows connection types significantly affect convergence (Table~\ref{tab:cross_problem}). On XOR, Backward and Lateral achieve 100\% ($p < 0.001$ vs Feedforward's 20\%). On Visual Discrimination (2D spatial classification)~\cite{stanley2009hypercube}, Backward achieves 100\% convergence while Lateral and Self hurt performance.

\begin{table}[ht]
\centering
\caption{Cross-problem convergence rate (\%) by connection type. Bold = best or tied. $p$: Fisher's exact, best vs feedforward.}
\label{tab:cross_problem}
\small
\begin{tabular}{lccccccc}
\toprule
Problem & FF & \hh{} & Back & Lat & Self & Full & $p$ \\
\midrule
XOR & 20 & 50 & \textbf{100} & \textbf{100} & 90 & 60 & $<$0.001 \\
Visual & 90 & 70 & \textbf{100} & 40 & 40 & 90 & $<$0.001 \\
Retina & \textbf{100} & 80 & 90 & 80 & \textbf{100} & \textbf{100} & n.s. \\
\bottomrule
\end{tabular}
\end{table}

\section{Experimental Evaluation}
\label{sec:experiments}

\subsection{Setup}

CPU: Apple M4 Max (JAX CPU backend). GPU: NVIDIA RTX 2080 Ti 11GB (JAX CUDA backend). Baseline: PUREPLES~\cite{westh2017pureples}, the reference Python implementation of ES-HyperNEAT (ES-HN). Problem: XOR (4 input patterns), standard neuroevolution benchmark~\cite{stanley2002evolving}. Both algorithms: 30 generations, 8 populations (50--1000), depths 1--7, \hh{} connection type, 3 replications (504 runs total). \emrhn{} implemented in JAX/Python on Tensor\-NEAT~\cite{wang2024tensorized}; each experiment ran in process isolation to ensure timing includes JIT compilation overhead.

\subsection{Scaling and Speedup}

Table~\ref{tab:speedup} presents per-generation timing. Figure~\ref{fig:total_runtime} shows total runtime scaling including depths 8--13 via memory streaming, with cumulative runtime IQR median approaching ${\sim}100\times$ over 30 generations at depth~7.

\begin{table}[t]
\centering
\caption{Average time per generation (XOR, pop=1000, \hh{} type). Speedup excludes one-time JIT compilation.}
\label{tab:speedup}
\small
\resizebox{\columnwidth}{!}{%
\begin{tabular}{lcccc}
\toprule
Depth & ES-HN & EMR & JIT & Speedup CPU/GPU \\
\midrule
1--3 & 0.3--1.0s & 1--3s / 0.9--2.3s & 4--8s / 7--17s & 0.2--0.3$\times$ \\
4 & 4.27s & 5s / 2.9s & 12s / 18s & 0.85$\times$ / 1.5$\times$ \\
5 & 46.3s & 10s / 3.9s & 15s / 19s & 4.6$\times$ / 12$\times$ \\
6 & 200.4s & 28s / 8.9s & 30s / 24s & 7.2$\times$ / 23$\times$ \\
7 & 2,020s & 165s / 60s$^\dagger$ & 60s / 39s & 12.2$\times$ / 34$\times$$^\dagger$ \\
\bottomrule
\multicolumn{5}{l}{\footnotesize $^\dagger$GPU depth 7 uses memory streaming (50--70s/gen on 11GB VRAM).}
\end{tabular}}
\end{table}

\begin{figure*}[!t]
\centering
\begin{tikzpicture}
\begin{axis}[
    width=\textwidth,
    height=5.75cm,
    ylabel={Total Runtime},
    xlabel={Depth},
    ymode=log,
    ytick={1, 10, 60, 600, 3600, 36000, 86400, 604800, 1209600, 2592000, 7776000},
    yticklabels={1s, 10s, 1min, 10min, 1h, 10h, 1d, 1w, 2w, 1mo, 3mo},
    ymin=0.5,
    ymax=12000000,
    xmin=0.5,
    xmax=13.5,
    xtick={1,2,3,4,5,6,7,8,9,10,11,12,13},
    xticklabels={1,2,3,4,5,6,7,8,9,10,11,12,13},
    ymajorgrids=true,
    grid style={dashed, gray!30},
    tick label style={font=\small},
    label style={font=\small},
    legend style={at={(0.02,0.98)}, anchor=north west, font=\scriptsize},
]
\addplot[name path=eshnmin, color=oivermillion, thick, dotted] coordinates {
    (1, 1.9) (2, 2.5) (3, 8.6) (4, 49.6) (5, 191.5) (6, 2716.6) (7, 13456.5)
};
\addplot[name path=eshnmax, color=oivermillion, thick, dotted] coordinates {
    (1, 6.0) (2, 7.5) (3, 20.2) (4, 81.1) (5, 492.8) (6, 6100.4) (7, 30703.5)
};
\addplot[fill=oivermillion, fill opacity=0.2] fill between[of=eshnmin and eshnmax];
\addplot[name path=cpumin, color=oigreen, thick, dashed] coordinates {
    (1, 17.9) (2, 22.5) (3, 26.1) (4, 34.1) (5, 64.0) (6, 207.5) (7, 729.8)
};
\addplot[name path=cpumax, color=oigreen, thick, dashed] coordinates {
    (1, 24.5) (2, 47.2) (3, 92.6) (4, 112.3) (5, 225.1) (6, 802.2) (7, 3277.9)
};
\addplot[fill=oigreen, fill opacity=0.3] fill between[of=cpumin and cpumax];
\addplot[name path=gpumin, color=oiblue, thick, dashed] coordinates {
    (1, 17.7) (2, 26.8) (3, 31.8) (4, 34.1) (5, 41.4) (6, 74.3) (7, 155.1)
    (8, 830) (9, 2599) (10, 11230) (11, 37764) (12, 147552) (13, 5467749)
};
\addplot[name path=gpumax, color=oiblue, thick, dashed] coordinates {
    (1, 50.4) (2, 82.7) (3, 77.6) (4, 116.4) (5, 110.3) (6, 182.2) (7, 590.9)
    (8, 2494) (9, 9166) (10, 31359) (11, 115239) (12, 605310) (13, 7388913)
};
\addplot[fill=oiblue, fill opacity=0.3] fill between[of=gpumin and gpumax];
\addplot[name path=cpujitmin, color=oigreen, thick, dotted] coordinates {
    (1, 3.7) (2, 4.3) (3, 5.4) (4, 5.7) (5, 6.8) (6, 8.9) (7, 19.6)
};
\addplot[name path=cpujitmax, color=oigreen, thick, dotted] coordinates {
    (1, 5.3) (2, 6.1) (3, 13.2) (4, 12.1) (5, 15.4) (6, 30.9) (7, 78.0)
};
\addplot[fill=oigreen, fill opacity=0.15, forget plot] fill between[of=cpujitmin and cpujitmax];
\addplot[name path=gpujitmin, color=oiblue, thick, dotted] coordinates {
    (1, 5.4) (2, 6.2) (3, 8.2) (4, 7.8) (5, 7.9) (6, 9.0) (7, 13.2)
    (8, 36.0) (9, 89.0) (10, 372) (11, 1171) (12, 4355) (13, 173291)
};
\addplot[name path=gpujitmax, color=oiblue, thick, dotted] coordinates {
    (1, 6.4) (2, 7.7) (3, 12.1) (4, 12.2) (5, 11.5) (6, 15.5) (7, 25.3)
    (8, 82.7) (9, 280) (10, 970) (11, 3434) (12, 20563) (13, 241994)
};
\addplot[fill=oiblue, fill opacity=0.15, forget plot] fill between[of=gpujitmin and gpujitmax];
\addplot[color=oivermillion, thick, mark=diamond*, mark size=2, dashed] coordinates {
    (1, 57.0) (2, 60.0) (3, 72.0) (4, 126.0) (5, 657.0) (6, 7437.0)
};
\addplot[color=oiblue, thick, mark=pentagon*, mark size=2, dashed] coordinates {
    (1, 138.7) (2, 144.8) (3, 157.5) (4, 163.9) (5, 579.5) (6, 1124.4)
};
\draw[thick, gray] (axis cs:7, 0.5) -- (axis cs:7, 50000);
\draw[thick, gray, decorate, decoration={brace, amplitude=4pt}]
    (axis cs:1, 0.8) -- (axis cs:7, 0.8);
\node[font=\scriptsize, gray] at (axis cs:4, 1.8) {on-device computation};
\draw[thick, gray, decorate, decoration={brace, amplitude=4pt}]
    (axis cs:7, 0.8) -- (axis cs:13, 0.8);
\node[font=\scriptsize, gray] at (axis cs:10, 1.8) {memory streaming};
\draw[->, >=stealth, thick, black] (axis cs:1.5, 4) -- (axis cs:6.5, 18000);
\node[font=\tiny, black, rotate=24, above] at (axis cs:4.2, 400)
    {$\sim$5000$\times$ ($\approx 4^D$)};
\draw[->, >=stealth, thick, black] (axis cs:1.5, 10) -- (axis cs:6.5, 24);
\node[font=\tiny, black, rotate=2] at (axis cs:4.2, 25)
    {$\sim$7$\times$ (near-linear)};
\draw[->, >=stealth, thick, black] (axis cs:7.5, 50) -- (axis cs:12.5, 100000);
\node[font=\tiny, black, rotate=22, above] at (axis cs:10, 1600)
    {$\sim$4000$\times$ ($\approx 4^D$)};
\draw[->, >=stealth, thick, black]
    (axis cs:11.3, 6000000) -- (axis cs:12.9, 6000000);
\node[font=\scriptsize, black, anchor=east] at (axis cs:11.25, 6000000)
    {disk I/O};
\legend{ES-HN, , , EMR (CPU), , , EMR (GPU), , , CPU JIT, , GPU JIT, , CCM ES-HN, CCM EMR}
\end{axis}
\end{tikzpicture}\vspace{-1em}
\caption{Runtime over 30 generations (log y-axis). IQR bands: d1--7 use 8 pops $\times$ 3 seeds; d8--13 use GPU$\leftrightarrow$RAM streaming (pop 300, 1000, 3 seeds). The d13 widening reflects disk I/O dominating JIT and per-gen cost. CCM lines: financial data~\cite{crsp_compustat_ccm} (94K samples, pop 100). Trend arrows: ${\sim}5000\times$ ES-HN ($\approx 4^D$), ${\sim}7\times$ EMR on-device, ${\sim}4000\times$ EMR streaming ($\approx 4^D$).}
\label{fig:total_runtime}
\Description{Log-scale runtime comparison from 1 second to 3 months across depths 1--13. ES-HyperNEAT grows exponentially and becomes infeasible past depth~7. EMR GPU scales near-linearly on-device for depths 1--7, then follows 4-to-the-D scaling under GPU-RAM streaming for depths 7--12. At depth 13, disk I/O becomes the dominant cost, shown by an annotation arrow. Bands show IQR across 8 populations and 3 replications.}
\vspace{-1em}
\end{figure*}

Statistical validation (Table~\ref{tab:statistical}) confirms that at depths $\geq$5, \emrhn{} GPU achieves significant speedups ($p < .001$, Cliff's $\delta \geq 0.99$, large effect) after Holm-Bonferroni correction. Speedup increases with depth (Spearman $\rho = 0.96$, $p < .001$). Runtime variance also diverges: at depth~7, estimated~$\sigma$ is 3.6h~(ES-HN), 31min~(EMR~CPU), 1.9min~(EMR~GPU), a 114$\times$ tighter bound.

\begin{table}[t]
\centering
\caption{Statistical comparison vs ES-HN: Mann-Whitney U (one-sided), Cliff's delta. Significant at $\alpha = 0.05$ for GPU depths $\geq 5$ after Holm-Bonferroni correction.}
\label{tab:statistical}
\small
\begin{tabular}{ccccccc}
\toprule
& \multicolumn{3}{c}{EMR GPU} & \multicolumn{3}{c}{EMR CPU} \\
\cmidrule(lr){2-4} \cmidrule(lr){5-7}
Depth & $p$ & $\delta$ & Speedup & $p$ & $\delta$ & Speedup \\
\midrule
1--3 & $>$0.99 & $-$1.00$^{L}$ & 0.1--0.3$\times$ & $>$0.99 & $-$1.00$^{L}$ & 0.1--0.3$\times$ \\
4 & $<$.001 & 0.59$^{L}$ & \textbf{1.6$\times$} & .915 & -0.23$^{S}$ & 0.8$\times$ \\
5 & $<$.001 & 0.99$^{L}$ & \textbf{6.8$\times$} & $<$.001 & 0.70$^{L}$ & \textbf{3.5$\times$} \\
6 & $<$.001 & 1.00$^{L}$ & \textbf{42$\times$} & $<$.001 & 0.98$^{L}$ & \textbf{11$\times$} \\
7 & $<$.001 & 1.00$^{L}$ & \textbf{75$\times$} & $<$.001 & 0.91$^{L}$ & \textbf{15$\times$} \\
\bottomrule
\multicolumn{7}{l}{\footnotesize Effect size: S=small, L=large.} \\
\end{tabular}
\end{table}

\subsection{Solution Quality}

Table~\ref{tab:pop_depth} shows \emrhn{} achieves equal or higher solve rates. At depth~6, ES-HyperNEAT solves XOR in only 33\% of runs (pop=300) while \emrhn{} achieves 100\%, due to bottom-up evaluation recovering positions that top-down traversal misses. Both achieve identical maximum fitness (0.99+) when successful.
\begin{table}[ht]
\centering
\caption{Per-generation time and solve rate (XOR, \hh{} type with cache). JIT = one-time compilation overhead.}
\label{tab:pop_depth}
\small
\begin{tabular}{llrrrr}
\toprule
Depth & Algorithm & Pop 300 & + JIT & Pop 500 & + JIT \\
\midrule
\multirow{3}{*}{4} & ES-HN & 1.5s / 67\% & - & 2.5s / 33\% & - \\
 & EMR (CPU) & 2s / 100\% & 11s & 3s / 100\% & 10s \\
 & EMR (GPU) & 0.9s / 100\% & 10s & 1.3s / 100\% & 12s \\
\midrule
\multirow{3}{*}{5} & ES-HN & 6.8s / 0\% & - & 22.2s / 100\% & - \\
 & EMR (CPU) & 6s / 100\% & 15s & 7s / 100\% & 14s \\
 & EMR (GPU) & 1.2s / 100\% & 10s & 1.9s / 100\% & 13s \\
\midrule
\multirow{3}{*}{6} & ES-HN & 147.8s / 33\% & - & 220.5s / 0\% & - \\
 & EMR (CPU) & 12s / 100\% & 16s & 40s / 100\% & 31s \\
 & EMR (GPU) & 2.6s / 100\% & 12s & 4.3s / 100\% & 15s \\
\midrule
\multirow{3}{*}{7} & ES-HN & 652s / 0\% & - & 1,649s / 0\% & - \\
 & EMR (CPU) & 48s / 100\% & 46s & 94s / 100\% & 72s \\
 & EMR (GPU) & 7.9s / 100\% & 17s & 13s / 100\% & 23s \\
\bottomrule
\end{tabular}
\end{table}

\subsection{Real-World Benchmark}

Table~\ref{tab:ccm_comparison} compares performance on CRSP/Compustat Merged financial data (94,464 observations)~\cite{crsp_compustat_ccm}. At depth~6, \emrhn{} is 5.5$\times$ faster despite streaming deeper layers from CPU~RAM when GPU memory is insufficient. ES-Hyper\-NEAT exhibits exponential scaling (depth~5$\to$6: 11.3$\times$ slower); \emrhn{} scales linearly on-device.

\begin{table}[t]
\centering
\caption{Per-generation time on CCM financial dataset (94,464 samples, pop=100). EMR uses \hh{} type (GPU).}
\label{tab:ccm_comparison}
\small
\begin{tabular}{rrrrr}
\toprule
Depth & ES-HN & EMR & EMR JIT & Speedup \\
\midrule
1--3 & 1.9--2.4s & 5.5--6.2s & 11--13s & 0.35--0.39$\times$ \\
4 & 4.2s & 6.6s & 13s & 0.64$\times$ \\
5 & 21.9s & 23s & 46s & 0.95$\times$ \\
6 & 247.9s & 45s$^\dagger$ & 89s & \textbf{5.5$\times$} \\
\bottomrule
\multicolumn{5}{l}{\footnotesize $^\dagger$Streams deeper layers from CPU RAM (GPU memory insufficient).}
\end{tabular}
\end{table}

\section{Discussion}
\label{sec:discussion}

The threshold adaptation (0.03 vs 0.5) is a necessary consequence of the reformulation: ES-HyperNEAT's original thresholds produce no solutions under eager evaluation because variance is computed over all positions rather than only inside already-accepted regions. The adapted thresholds produce more hidden node candidates, contributing to the higher solve rates at depths 5--7 (Table~\ref{tab:pop_depth}), at the cost of redundant CPPN queries amortized by parallelism. \emrhn{} is advantageous when JIT overhead (4--60s) is amortized across generations. JAX caches compiled functions, so experiments at matching depth and population size skip JIT, and the precomputed grid (${\sim}$683~KB at depth~7) is reused across problems. At low depth with few generations, ES-HyperNEAT remains preferable.

Caching accelerates computation further: cached \hh{} connections reduce subsequent discovery from ${\sim}$27s to ${\sim}$3ms. Memory scales as $4^D$: depth~5 $<$620~MB, depth~7 ${\sim}$10~GB (pop~500). At depth~13 (358M positions, pop~300), weight arrays stored on disk dominate both JIT (${\sim}$5.3~h) and per-gen cost (${\sim}$5.8~h), visible as the d12$\to$d13 widening in Figure~\ref{fig:total_runtime}.
\section{Conclusion}
\label{sec:conclusion}

To our knowledge, \emrhn{} is the first massively parallel adaptive substrate discovery algorithm. The lazy-to-eager reformulation yields significant speedups that grow with depth (Tables~\ref{tab:speedup}--\ref{tab:ccm_comparison}), validated on both XOR and financial data (94K samples). Beyond speedup, bottom-up evaluation discovers a superset of substrate positions (Theorem~\ref{thm:correctness}), which yields higher solve rates in practice, and the connection type taxonomy makes recurrent substrates computationally tractable. This makes previously unreachable substrate configurations accessible, opening per-node substrate complexification at new depth and population scales, though further efficiency gains via selective evaluation remain constrained by the need to preserve parallelization.

\section*{Code Availability}

\url{https://github.com/RomainClaret/emr-hyperneat}

\begin{acks}
Generative AI tools assisted with visualizations, pseudocode conversion, and draft revision.
\end{acks}

\bibliographystyle{ACM-Reference-Format}
\bibliography{references}

\received{19 January 2026}
\received[revised]{10 March 2026}
\received[accepted]{27 January 2026}

\end{document}